\newcommand{\citep}[1]{\cite{#1}}
\newcommand{\citet}[1]{\cite{#1}}
\pgfplotsset{compat=1.15}
\pgfplotsset{every tick label/.append style={font=\scriptsize}}
\pgfplotsset{every label/.append style={font=\scriptsize}}
\newcommand{\HSIC}{\text{HSIC}}
\newcommand{\empHSIC}{\widehat\HSIC}
\newcommand{\E}{\mathbb{E}}
\DeclareMathOperator*{\minimize}{minimize}
\pgfplotsset{
    /pgfplots/xlabel near ticks/.style={
        /pgfplots/every axis x label/.style={
        at={(ticklabel cs:0.5)},anchor=near ticklabel
        }
    },
    /pgfplots/ylabel near ticks/.style={
        /pgfplots/every axis y label/.style={
        at={(ticklabel cs:0.5)},rotate=90,anchor=near ticklabel}
    }
}
\theoremstyle{plain}
\newtheorem{prop}{Proposition}
\newtheorem{exmp}{Example}
\newtheorem{defn}{Definition}
\newtheorem{asmp}{Assumption}
\theoremstyle{remark}
\title{Knowledge-Based Regularization in Generative Modeling}
\author{
Naoya Takeishi$^1$\footnote{Contact Author}\And
Yoshinobu Kawahara$^{2,1}$
\\
\affiliations
$^1$RIKEN Center for Advanced Intelligence Project\\
$^2$Institute of Mathematics for Industry, Kyushu University\\
\emails
naoya.takeishi@riken.jp, \
kawahara@imi.kyushu-u.ac.jp
}
\begin{document}

\maketitle

\begin{abstract}
Prior domain knowledge can greatly help to learn generative models. However, it is often too costly to hard-code prior knowledge as a specific model architecture, so we often have to use general-purpose models. In this paper, we propose a method to incorporate prior knowledge of feature relations into the learning of general-purpose generative models. To this end, we formulate a regularizer that makes the marginals of a generative model to follow prescribed relative dependence of features. It can be incorporated into off-the-shelf learning methods of many generative models, including variational autoencoders and generative adversarial networks, as its gradients can be computed using standard backpropagation techniques. We show the effectiveness of the proposed method with experiments on multiple types of datasets and generative models.
\end{abstract}


\section{Introduction}
\label{intro}

Generative modeling plays a key role in many scientific and engineering applications.
It has often been discussed with the notion of graphical models \citep{PGMBook}, which are built upon knowledge of conditional independence of features.
We have also seen the advances in neural network techniques for generative modeling, such as variational autoencoders (VAEs) \citep{Kingma14} and generative adversarial networks (GANs) \citep{Goodfellow14}.

For efficient generative modeling, a model should be designed following \emph{prior knowledge} of target phenomena.
If one knows the conditional independence of features, it can be encoded as a graphical model.
If there are some insights on the physics of the phenomena, a model can be built based on a known form of differential equations.
Otherwise, a special neural network architecture may be created in accordance with the knowledge.
However, although some tools (e.g., \cite{Koller97}) have been suggested, it is often labor-intensive and possibly even infeasible to design a generative model meticulously so that the prior knowledge specific to each problem instance is hard-coded in the model.

On the other hand, we may employ general-purpose generative models, such as kernel-based models and neural networks with common architectures (multi-layer perceptrons, convolutional nets, and recurrent nets, etc.).
Meanwhile, a general-purpose model, if used as is, is less efficient than specially-designed ones in terms of sample complexity because of large hypothesis space.
Therefore, we want to incorporate as much prior knowledge as possible into a model, somehow avoiding the direct model design.

A promising way to incorporate prior knowledge into general-purpose models is via regularization.
For example, structured sparsity regularization \citep{Huang11} is known as a method for regularizing linear models with a rigorous theoretical background.
In a related context, posterior regularization \citep{Ganchev10,Zhu14} has been discussed as a methodology to impose expectation constraints on learned distributions.
However, the applicability of the existing regularization methods is still limited in terms of target types of prior knowledge and base models.

In generative modeling, we often have prior knowledge of \emph{relationship between features}.
For instance, consider generative modeling of sensor data, which is useful for tasks such as control and fault detection.
In many instances, we know \emph{to some extent} how units and sensors in a plant are connected (Figure~\ref{fig:knowledge:partial}), from which the dependence of a part of features (here, sensor readings) can be anticipated.
In the example of Figure~\ref{fig:knowledge:partial}, features $x_1$ and $x_2$ would be more or equally dependent than $x_1$ and $x_3$ because of external disturbances in the processes between the units.
Another example is when we know the pairwise relationship of features (Figure~\ref{fig:knowledge:sideinfo}), which can be derived from some side information.

The point is that we rarely know the full data structure, which is necessary for building a graphical model.
Instead, we know \emph{a part of} the data structure or pairwise relationship of \emph{some} features, and there remain unknown parts that should be modeled using general-purpose models.
Most existing methods cannot deal with such partial knowledge straightforwardly.
In this paper, we propose a regularizer to incorporate such knowledge into general-purpose generative models based on the idea that statistical dependence of features can be (partly) anticipated from the relationship of features.
The use of this type of knowledge has been actively discussed in several contexts of machine learning, but there have been surprisingly few studies in the context of generative modeling.

The proposed regularizer is defined using a kernel-based criterion of dependence \citep{Gretton05}, which is advantageous because its gradients can be computed using standard backpropagation without additional iterative procedures.
Consequently, it can be incorporated into off-the-shelf gradient-based learning methods of many general-purpose generative models, such as latent variable models (e.g., factor analysis, topic models, etc.), VAEs, and GANs.
We conducted experiments using multiple datasets and generative models, and the results showcase that a model regularized using prior knowledge of feature relations achieves better generalization.
The proposed method can provide a trade-off between performance and workload for model design.



\section{Background}
\label{back}

In this section, we briefly review two technical building blocks: generative modeling and dependence criteria.

\subsection{Generative Modeling}

We use the term ``learning generative models'' in the sense that we are to learn $p_\theta(\bm{x})$ explicitly or implicitly.
Here, $\bm{x}$ denotes the observed variable, and $\theta$ is the set of parameters to be estimated.
Many popular generative models are built as Bayesian networks and Markov random fields \cite{PGMBook}.
Also, the advances in deep learning technique include VAEs \citep{Kingma14}, GANs (e.g., \citep{Goodfellow14}), autoregressive models (e.g., \cite{vandenOord16}), and normalizing flows (e.g., \cite{Dinh18}).

The learning strategies for generative models are usually based on minimization of some loss function $L(\theta)$:
\begin{equation}\label{eq:optim}
  \minimize_\theta ~ L(\theta).
\end{equation}
A typical loss function is the negative log-likelihood or its approximation (e.g., ELBO in variational Bayes) \citep{PGMBook,Kingma14}.
Another class of loss functions is those designed to perform the comparison of model- and data-distributions, which has been studied recently often in the context of learning implicit generative models that have no explicit expression of likelihood.
For example, GANs \citep{Goodfellow14} are learned via a two-player game between a discriminator and a generator.

\begin{figure}[t]
    \centering
    \begin{minipage}[t]{4.2cm}
        \vspace*{0mm}
        \subfloat[]{
            \centering
            \begin{tikzpicture}
                \node [fill=gray!40,circle,inner sep=2pt,minimum size=.45cm,draw,label=below:{\scriptsize Sensor 1}] (s1) at (0.46,0.65) {\scriptsize $x_1$};
                \node [fill=gray!40,circle,inner sep=2pt,minimum size=.45cm,draw,label=below:{\scriptsize Sensor 2}] (s2) at (1.74,0.65) {\scriptsize $x_2$};
                \node [fill=gray!40,circle,inner sep=2pt,minimum size=.45cm,draw,label=above:{\scriptsize Sensor 3}] (s3) at (2.9,2.0) {\scriptsize $x_3$};
                \node [fill=white,minimum size=.3cm,draw,label={[label distance=0mm]180:{\scriptsize Unit 1}}] (u1) at (.46,1.3) {\scriptsize $z_1$};
                \node [fill=white,minimum size=.3cm,draw,label={[label distance=0mm]0:{\scriptsize Unit 2}}] (u2) at (1.74,1.3) {\scriptsize $z_2$};
                \node [fill=white,minimum size=.3cm,draw,label={[label distance=0mm]180:{\scriptsize Unit 3}}] (u3) at (1.74,2.0) {\scriptsize $z_3$};
                \draw[->] (u1) -- (s1);
                \draw[->] (u2) -- (s2);
                \draw[->] (u3) -- (s3);
                \draw[thick,->] (u1) -- (u2);
                \draw[thick,->] (u2) -- (u3);
            \end{tikzpicture}
            \label{fig:knowledge:partial}
        }
    \end{minipage}
    \hfill
    \begin{minipage}[t]{3.9cm}
        \vspace*{0mm}
        \subfloat[]{
            \centering
            \begin{tikzpicture}
                \node [fill=gray!40,circle,inner sep=2pt,outer sep=2pt,minimum size=.45cm,draw,label=right:{\scriptsize Feature 1}] (w1) at (2.1,0.3) {\scriptsize $x_1$};
                \node [fill=gray!40,circle,inner sep=2pt,outer sep=2pt,minimum size=.45cm,draw,label=right:{\scriptsize Feature 2}] (w2) at (2.1,1.0) {\scriptsize $x_2$};
                \node [fill=gray!40,circle,inner sep=2pt,outer sep=2pt,minimum size=.45cm,draw,label=right:{\scriptsize Feature 3}] (w3) at (2.1,1.7) {\scriptsize $x_3$};
                \node [fill=gray!40,circle,inner sep=2pt,outer sep=2pt,minimum size=.45cm,draw,label=right:{\scriptsize Feature 4}] (w4) at (2.1,2.4) {\scriptsize $x_4$};
                \node [text width=1.2cm,align=right,inner sep=2pt] (r1) at (0.6,0.65) {\scriptsize similar};
                \node [text width=1.2cm,align=right,inner sep=2pt] (r2) at (0.6,1.35) {\scriptsize dissimilar};
                \node [text width=1.2cm,align=right,inner sep=2pt] (r3) at (0.6,2.05) {\scriptsize similar};
                \draw[] (w1) -- (r1.east);
                \draw[] (w2) -- (r1.east);
                \draw[] (w2) -- (r2.east);
                \draw[] (w3) -- (r2.east);
                \draw[] (w3) -- (r3.east);
                \draw[] (w4) -- (r3.east);
            \end{tikzpicture}
            \label{fig:knowledge:sideinfo}
        }
    \end{minipage}
    \caption{Examples of knowledge of feature dependence.}
    \label{fig:knowledge}
\end{figure}
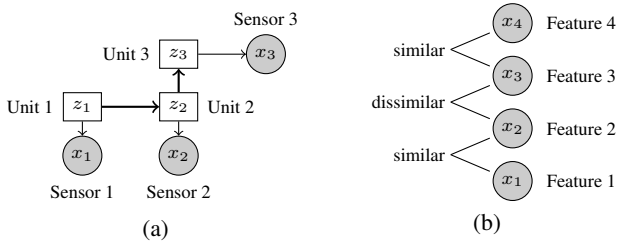

\subsection{Dependence Criteria}

Among several measures of statistical dependence of random variables, we adopt a kernel-based method, namely Hilbert--Schmidt independence criterion (HSIC) \citep{Gretton05}.
The advantage of HSIC is discussed later in Section~\ref{regularizer}.
Below we review the basic concepts.

HSIC is defined and computed as follows \citep{Gretton05}.
Let $p_{xy}$ be a joint measure over $(\mathcal{X}\times\mathcal{Y}, \Gamma\times\Lambda)$, where $\mathcal{X}$ and $\mathcal{Y}$ are separable spaces, $\Gamma$ and $\Lambda$ are Borel sets on $\mathcal{X}$ and $\mathcal{Y}$, and $(\mathcal{X},\Gamma)$ and $(\mathcal{Y},\Lambda)$ are furnished with probability measure $p_x$ and $p_y$, respectively.
Given reproducing kernel Hilbert spaces (RKHSs) $\mathcal{F}$ and $\mathcal{G}$ on $\mathcal{X}$ and $\mathcal{Y}$, respectively, HSIC is defined as the squared Hilbert--Schmidt norm of a cross-covariance operator $C_{xy}$, i.e., $\HSIC(\mathcal{F},\mathcal{G},p_{xy}) := \Vert C_{xy} \Vert_\mathrm{HS}^2$.
When bounded kernels $k$ and $l$ are uniquely associated with the RKHSs, $\mathcal{F}$ and $\mathcal{G}$, respectively,
\begin{multline}\label{eq:hsic_pop}
    \HSIC(\mathcal{F},\mathcal{G},p_{xy}) =
    \E_{x,x',y,y'}[k(x,x')l(y,y')]
    \\
    + \E_{x,x'}[k(x,x')] \E_{yy'}[l(y,y')]
    \\
    - 2 \E_{x,y} \big[ \E_{x'}[k(x,x')] \E_{y'}[l(y,y')] \big] \geq 0.
\end{multline}
HSIC works as a dependence measure because $\HSIC=0$ if and only if two random variables are statistically independent (Theorem~4, \cite{Gretton05}).

HSIC can be empirically estimated using a dataset $\mathcal{D}=\{(x_1,y_1),\dots,(x_m,y_m)\}$ from \eqref{eq:hsic_pop}.
Gretton \emph{et al.} \citep{Gretton05} presented a biased estimator with $O(m^{-1})$ bias, and Song \emph{et al.} \citep{Song12} suggested an unbiased estimator.
In what follows, we denote an empirical estimation of $\HSIC(\mathcal{F},\mathcal{G},p_{xy})$ computed using a dataset $\mathcal{D}$ by $\empHSIC_{x,y}(\mathcal{D})$.
While the computation of empirical HSICs requires $O(m^2)$ operations, it may be sped up by methods such as random Fourier features \citep{Zhang18}.


As later discussed in Section~\ref{motivation}, we particularly exploit \emph{relative} dependence of the features.
Bounliphone \emph{et al.} \citep{Bounliphone15} discussed the use of HSIC for a test of relative dependence.
Here, we introduce $\mathcal{H}$ as a separable RKHS on another separable space $\mathcal{Z}$.
Then, the test of relative dependence (i.e., which pair is more dependent, $(x,y)$ or $(x,z)$?) is formulated with the null and alternative hypotheses:
\begin{equation*}
    H_0\colon \ \rho_{x,y,z} \leq 0
    \quad\text{and}\quad
    H_1\colon \ \rho_{x,y,z} > 0,
\end{equation*}
where $\rho_{x,y,z} := \HSIC(\mathcal{F},\mathcal{G},p_{xy}) - \HSIC(\mathcal{F},\mathcal{H},p_{xz})$.
For this test, they define a statistic
\begin{equation}\label{eq:statistic}
    \hat\rho_{x,y,z} := \empHSIC_{x,y}(\mathcal{D})-\empHSIC_{x,z}(\mathcal{D}).
\end{equation}
From the asymptotic distribution of empirical HSIC, it is known \citep{Bounliphone15} that a conservative estimation of $p$-value of the test with $\hat\rho_{x,y,z}$ is obtained as
\begin{equation}\label{eq:pvalue}
    p \leq 1 - \Phi \big( \hat\rho_{x,y,z} (\sigma^2_{xy}+\sigma^2_{xz}-2\sigma_{xyxz})^{-1/2} \big),
\end{equation}
where $\Phi$ is the CDF of the standard normal, and $\sigma$'s denote the standard deviations of the asymptotic distribution of HSIC.
See \citep{Bounliphone15} for the definition of $\sigma^2_{xy}$, $\sigma^2_{xz}$, and $\sigma_{xyxz}$, which can also be estimated empirically.
Here, \eqref{eq:pvalue} means that under the null hypothesis $H_0$, the probability that $\rho_{x,y,z}$ is greater than or equal to $\hat\rho_{x,y,z}$ is bounded by this value.
In the proposed method, we exploit this fact to formulate a regularization term.


\section{Proposed Method}
\label{method}

First, we manifest the type of generative models to which the proposed method applies.
Then, we define what we expect to have as prior knowledge.
Finally, we present the proposed regularizer and give its interpretation.

\subsection{Target Type of Generative Models}
\label{target}

Our regularization method is agnostic of the original loss function of generative modeling and the parametrization, as long as the following (informal) conditions are satisfied:
\begin{asmp}\label{asmp:1}
    Samples from $p_\theta(\bm{x})$, namely $\hat{\bm{x}}$, can be drawn with an admissible computational cost.
\end{asmp}
\begin{asmp}\label{asmp:2}
    Gradients $\nabla_\theta \mathbb{E}_{p_\theta(\bm{x})} \big[ f(\bm{x}) \big]$ can be (approximately) computed with an admissible computational cost.
\end{asmp}
While Assumption~\ref{asmp:1} is satisfied in most generative models, Assumption~\ref{asmp:2} is a little less obvious.
We note that the efficient computation of $\nabla_\theta \mathbb{E}_{p_\theta(\bm{x})} \big[ f(\bm{x}) \big]$ is often inherently easy or facilitated with techniques such as the reparameterization trick \citep{Kingma14} or its variants, and thus these assumptions are satisfied by many popular methods such as factor analysis, VAEs, and GANs.

\subsection{Target Type of Prior Knowledge}
\label{motivation}

We suppose that we know the (partial) relationship of features that can be encoded as plausible relative dependence of the features.
Such knowledge is frequently available in many practices of generative modeling, and in fact, the use of such knowledge has also been considered in other contexts of machine learning (see Section~\ref{useofknowledge}).
Below we introduce several motivating examples and then give a formal definition.

\subsubsection{Motivating Examples}

The first motivating example is when we know (a part of) the data-generating process.
Suppose to learn a generative model of sensor data of an industrial plant.
We usually know how units and sensors in the plant are connected, which is an important source of prior knowledge.
In Figure~\ref{fig:knowledge:partial}, suppose that Unit~$i$ has internal state $z_i$ for $i=1,\dots,3$.
Also, suppose there are relations $z_2=h_{12}(z_1,\omega_1)$ and $z_3=h_2(z_2,\omega_2)$, where $\omega$'s are random noises and $h$'s are functions of physical processes.
Then, $(p(z_1),p(z_2))$ would be more statistically dependent than $(p(z_1),p(z_3))$ because of the presence of $\omega_2$.
If the sensor readings, $\{x_i\}$, are determined by $x_i=g(z_i)$ with observation functions $\{g_i\}$, then, $(p(x_1),p(x_2))$ would be more statistically dependent than $(p(x_1),p(x_3))$ analogously to $z$.
Figure~\ref{fig:knowledge:partial} is revisited in Example~\ref{exmp:1}.
This type of prior knowledge is often available also for physical and biological phenomena.

Another type of example is when we know pairwise similarity or dissimilarity of features (Figure~\ref{fig:knowledge:sideinfo}).
For example, we may estimate similarities of distributed sensors from their locations.
Also, we may anticipate similarities of words using ontology or word embeddings.
Moreover, we may know the similarities of molecules from their descriptions in the chemical compound analysis.
We can anticipate relative dependence from such information, i.e., directly similar feature pairs are more dependent than dissimilar feature pairs.
Figure~\ref{fig:knowledge:sideinfo} is revisited in Example~\ref{exmp:1}.
Here, we do not have to know every pairwise relation; knowledge on \emph{some} feature pairs is sufficient to anticipate the relative dependence.


\subsubsection{Definition of Prior Knowledge}

As the exact degree of feature dependence can hardly be described precisely from prior knowledge, we use the dependence of a pair of features \emph{relative to another pair}.
This idea is formally written as follows:
\begin{defn}[Knowledge of feature dependence]\label{def:knowledge}
    Suppose that the observed random variable, $\bm{x}$, is a tuple of $d$ random variables, i.e., $\bm{x}=(x_1,\dots,x_d)$.
    Knowledge of feature dependence is described as a set of triples:
    \begin{equation}\label{eq:def}
        \mathcal{K} := \big\{ (J_s^\text{ref},J_s^+,J_s^-) \mid s=1,\dots,\vert\mathcal{K}\vert \big\},
    \end{equation}
    where $J^\text{ref},J^+,J^- \subset \{1,\dots,d\}$ are index sets.
    The semantics are as follows.
    Let $J=\{i_1,\dots,i_{\vert J \vert}\}$, and let $\bm{x}_J$ be the subtuple of $\bm{x}$ by $J$, i.e., $\bm{x}_J = (x_{i_1},\dots,x_{i_{\vert J \vert}})$.
    Then, triple $(J_s^\text{ref},J_s^+,J_s^-)$ encodes the following piece of knowledge: ``$\bm{x}_{J_s^\text{ref}}$ is more dependent on $\bm{x}_{J_s^+}$ than on $\bm{x}_{J_s^-}$.''
\end{defn}
\begin{exmp}\label{exmp:1}
    $\mathcal{K}=\{(\{1\}, \{2\}, \{3\})\}$ in Figure~\ref{fig:knowledge:partial} and $\mathcal{K}=\{(\{1\}, \{2\}, \{3,4\})\}$ in Figure~\ref{fig:knowledge:sideinfo}.
\end{exmp}
More examples of $\mathcal{K}$ are in Section~\ref{expt}.

\subsection{Proposed Regularization Method}
\label{regularizer}

We define the proposed regularization method and give its interpretation as a probabilistic penalty method.

\subsubsection{Definition of Regularizer}

We want to force a generative model $p_\theta(\bm{x})$ to follow the relations encoded in $\mathcal{K}$.
To this end, the order of HSIC between the marginals of $p_\theta(\bm{x})$ should be as consistent as possible to the relations in $\mathcal{K}$.
More concretely, the HSIC of $(p_\theta(\bm{x}_{J^\text{ref},s}),p_\theta(\bm{x}_{J^+,s}))$ should be larger than that of $(p_\theta(\bm{x}_{J^\text{ref},s}),p_\theta(\bm{x}_{J^-,s}))$, for $s=1,\dots,\vert\mathcal{K}\vert$.

Because directly imposing constraints on the true HSIC of the marginals is intractable, we resort to a penalty method using empirical HSIC.
Concretely, we add a term that penalizes the violation of the knowledge in $\mathcal{K}$ as follows.
\begin{defn}[Knowledge-based regularization]\label{def:main}
    Let $L(\theta)$ be the original loss function in \eqref{eq:optim}.
    The learning problem regularized using $\mathcal{K}$ in \eqref{eq:def} is posed as
    \begin{equation}\label{eq:optim_reg}
        \minimize_{\theta} ~ L(\theta) + \lambda R_\mathcal{K}(\theta),
    \end{equation}
    where $\lambda \geq 0$ is a regularization hyperparameter, and
    \begin{equation}\label{eq:reg}\begin{gathered}
        R_\mathcal{K}(\theta) := \frac1{\vert \mathcal{K} \vert} \sum_{s=1}^{\vert \mathcal{K} \vert} R_{\mathcal{K},s}(\theta),
        \\
        R_{\mathcal{K},s}(\theta) := \max(0, \  \nu_\alpha - \hat\rho_s(\theta)).
    \end{gathered}\end{equation}
    Here, $\nu_\alpha \geq 0$ is another hyperparameter.
    $\hat{\rho}_s(\theta)$ is the empirical estimation of the relative HSIC corresponding to the $s$-th triple in $\mathcal{K}$, that is,
    \begin{equation}\label{eq:rho:emp}
        \hat{\rho}_s(\theta) := \empHSIC_{x_{J^\text{ref}_s}, x_{J^+_s}}(\hat{\mathcal{D}}) - \empHSIC_{x_{J^\text{ref}_s}, x_{J^-_s}}(\hat{\mathcal{D}}),
    \end{equation}
    where $\hat{\mathcal{D}}$ denotes a set of samples drawn from $p_\theta(\bm{x})$.
\end{defn}


\subsubsection{Interpretation}

By imposing the regularizer $R_\mathcal{K}$, we expect that the hypothesis space to be explored is made smaller implicitly, which would result in better generalization capability.
This is also supported by the following interpretation of the regularizer.

While $R_\mathcal{K}(\theta)$ depends on the \emph{empirical} HSIC values, making (each summand of) $R_\mathcal{K}(\theta)$ small can be interpreted as imposing ``probabilistic'' constraints on the order of the \emph{true} HSIC values via a penalty method.
Now let $\rho_s$ denote the true value of $\hat{\rho}_s$ (we omit argument $\theta$ for simplicity), i.e.,
\begin{equation}
    \rho_s := \HSIC(\mathcal{F}_{J^\text{ref}_s}, \mathcal{F}_{J^+_s}) - \HSIC(\mathcal{F}_{J^\text{ref}_s}, \mathcal{F}_{J^-_s}),
\end{equation}
where $\mathcal{F}_J$ denotes a separable RKHS on the space of $\bm{x}_J$.
Moreover, consider a test with hypotheses:
\begin{equation}\label{eq:hypothesis}
    H_{s,0}\colon \ \rho_s \leq 0
    \quad\text{and}\quad
    H_{s,1}\colon \ \rho_s > 0,
\end{equation}
for $s=1,\dots,\vert\mathcal{K}\vert$.
On the test with $H_{s,0}$ and $H_{s,1}$, the following proposition holds:
\begin{prop}\label{prop:main}
    If $R_{\mathcal{K},s}=0$ is achieved, then the null hypothesis $H_{s,0}$ can be rejected in favor of the alternative $H_{s,1}$ with $p$-value $p_s$ upper bounded by
    \begin{equation}\label{eq:prop:main}
        p_s \leq 1 - \Phi(\nu_\alpha / \tau_s), \quad \exists \tau_s>0.
    \end{equation}
\end{prop}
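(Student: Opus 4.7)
The plan is to chain together two observations: the zero-penalty condition directly gives a lower bound on the empirical relative-dependence statistic $\hat\rho_s$, and the conservative $p$-value bound recalled in \eqref{eq:pvalue} then converts that into a bound on the $p$-value of the test \eqref{eq:hypothesis}. So the proof will be essentially a one-line substitution dressed up with the right identification of the variance term $\tau_s$.

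Concretely, I would first unfold the definition of $R_{\mathcal{K},s}$ in \eqref{eq:reg}. Since $R_{\mathcal{K},s} = \max(0, \nu_\alpha - \hat\rho_s) = 0$ and $\nu_\alpha \geq 0$, this forces $\hat\rho_s \geq \nu_\alpha$. Next, I would specialize the relative dependence test of \citep{Bounliphone15} (reviewed in Section~\ref{back}) to the triple $(\bm{x}_{J_s^\text{ref}}, \bm{x}_{J_s^+}, \bm{x}_{J_s^-})$: this gives exactly the hypotheses $H_{s,0},H_{s,1}$ in \eqref{eq:hypothesis}, and \eqref{eq:pvalue} yields the conservative upper bound $p_s \leq 1 - \Phi\big(\hat\rho_s/\tau_s\big)$, where I set $\tau_s := \sqrt{\sigma^2_{J_s^\text{ref} J_s^+} + \sigma^2_{J_s^\text{ref} J_s^-} - 2\sigma_{J_s^\text{ref} J_s^+ J_s^\text{ref} J_s^-}}$ using the notation of \citep{Bounliphone15}. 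Monotonicity of $\Phi$ combined with $\hat\rho_s \geq \nu_\alpha$ then gives $1 - \Phi(\hat\rho_s/\tau_s) \leq 1 - \Phi(\nu_\alpha/\tau_s)$, which is the claimed bound \eqref{eq:prop:main}.

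The only substantive point that needs care is the assertion that $\tau_s > 0$, which is required both for the inequality above and for the ratio $\nu_\alpha/\tau_s$ to be well defined. I expect to justify this by appealing to the non-degeneracy of the asymptotic distribution of the empirical HSIC discussed in \citep{Gretton05,Bounliphone15}: under mild regularity of the kernels and the marginals of $p_\theta$, the pair of empirical HSICs involved is jointly asymptotically normal with a positive-definite covariance, so the linear combination defining $\tau_s^2$ is strictly positive. This is the main obstacle in the sense that it is the only step not purely mechanical; I would state it as an implicit regularity assumption on the kernels and on $p_\theta$ rather than proving it from scratch, since the $\tau_s > 0$ condition is exactly what \citep{Bounliphone15} assume in their Theorem justifying \eqref{eq:pvalue}.

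Finally, I would emphasize the interpretation: the bound is vacuous when $\nu_\alpha = 0$ (since $\Phi(0) = 1/2$ gives $p_s \leq 1/2$) and becomes informative as $\nu_\alpha$ grows, which justifies treating $\nu_\alpha$ as a user-specified significance-level knob analogous to the threshold used in one-sided Gaussian tests. This interpretive remark is not strictly needed for the proof but clarifies the role of the hyperparameter introduced in Definition~\ref{def:main}.
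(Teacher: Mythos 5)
Your proof is correct and follows essentially the same route as the paper: extract $\hat\rho_s \geq \nu_\alpha$ from $R_{\mathcal{K},s}=0$, invoke the conservative bound \eqref{eq:pvalue} with the same identification of $\tau_s$, and conclude by monotonicity of $\Phi$. Your extra care about why $\tau_s>0$ (the paper simply asserts it) and the interpretive remark on $\nu_\alpha$ are harmless additions that do not change the argument.
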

\begin{proof}
    From \eqref{eq:pvalue}, we have $p_s \leq 1 - \Phi(\hat{\rho}_s / \tau_s)$, where
    \begin{equation*}
        \tau_s = (\sigma^2_{J^\text{ref}_s J^+_s} + \sigma^2_{J^\text{ref}_s J^-_s} - 2 \sigma_{J^\text{ref}_s J^+_s J^\text{ref}_s J^-_s})^{1/2} > 0.
    \end{equation*}
    Also, when $R_{\mathcal{K},s}=0$, we have $\hat{\rho}_s \geq \nu_\alpha$.
    Consequently, as $\Phi$ is monotonically increasing, we have \eqref{eq:prop:main}.
\end{proof}

\subsubsection{Choice of Hyperparameters}

The proposed regularizer has two hyperparameters, $\lambda$ and $\nu_\alpha$.
While $\lambda$ is a standard regularization parameter that balances the loss and the regularizer, $\nu_\alpha$ can be interpreted as follows.
Now let $\alpha := 1-\Phi(\nu_\alpha/\tau_s)$ (i.e., the right-hand side of \eqref{eq:prop:main}), which corresponds to the required significance of our probabilistic constraints.
We can determine $0<\alpha<1$ based on the plausibility of the prior knowledge, such as $\alpha=0.05$ or $\alpha=0.1$ (a smaller $\alpha$ requires higher significance).
Also, $\tau_s$ can be estimated as it is defined with the variances of the asymptotic distribution of HSIC \citep{Bounliphone15}.
Hence, as $\nu_\alpha = \tau_s \Phi^{-1}(1 - \alpha)$, we can roughly determine the value of $\nu_\alpha$ that corresponds to a specific significance, $\alpha$.
The need to tune $\nu_\alpha$ (or $\alpha$) remains, but the above interpretation is useful to determine the search range for $\nu_\alpha$.

\subsubsection{Optimization Method}

From Assumptions~\ref{asmp:1} and \ref{asmp:2}, the gradient of $\hat\rho_s$ with regard to $\theta$ can be computed with the backpropagation via the samples from $p_\theta(\bm{x})$, which are used to compute empirical HSIC.
Hence, if the solution of the original optimization, \eqref{eq:optim}, is obtained via a gradient-based method, the regularized version, \eqref{eq:optim_reg}, can also be solved using the same gradient-based method.
The situation is especially simplified if the dataset is a set of $d$-dimensional vectors, i.e., $\mathcal{D}=\{\bm{x}_i\in\mathbb{R}^d \mid i=1,\dots,n\}$.
In the optimization process, sample $\hat{\mathcal{D}} = \{ \hat{\bm{x}}_i \in \mathbb{R}^d \mid i=1,\dots,m \}$ from $p_\theta(\bm{x})$ being learned, and use them for computing the the regularization term in \eqref{eq:reg} and their gradients.
Then, incorporate them into the original gradient-based updates.
These procedures are summarized in Algorithm~\ref{alg:main}.

The computation of $R_\mathcal{K}$ requires $O(m^2 \vert\mathcal{K}\vert)$ operations in a naive implementation, where $m$ is the number of samples drawn from $p_\theta(\bm{x})$.
This will be burdensome for a very large $\vert\mathcal{K}\vert$, which can be alleviated by carefully choosing $J^\text{ref}$'s so that the computed HSIC values can be reused for many times.

\begin{algorithm}[t]
    \renewcommand{\algorithmicrequire}{\textbf{Input:}}
    \renewcommand{\algorithmicensure}{\textbf{Output:}}
    \def\NoNumber#1{{\def\alglinenumber##1{}\State #1}\addtocounter{ALG@line}{-1}}
    \caption{Knowledge-regularized gradient method}
    \label{alg:main}
    {\begin{algorithmic}[1]
        \Require Data $\mathcal{D}=\{\bm{x}_i\}$, knowledge $\mathcal{K} = \{ (J_s^\text{ref},J_s^+,J_s^-)\}$, hyperparameters $\lambda \geq 0, \ \nu_\alpha \geq 0$, sample size $m$
        \Ensure A set of parameters $\theta^*$ of $p_\theta(\bm{x})$
        \State initialize $\theta$
        \Repeat
            \State draw $\hat{\mathcal{D}} = \{ \hat{\bm{x}}_i \in \mathbb{R}^d \mid i=1,\dots,m \}$ from $p_\theta(\bm{x})$ 
            %
            \For{$s=1,\dots,\vert \mathcal{K} \vert$}
                \State $J_s \leftarrow J^\text{ref}_s \cup J^+_s \cup J^-_s$
                \State $\hat{\mathcal{D}}_s \leftarrow \{ [\hat{\bm{x}}_i]_{J_s} \in \mathbb{R}^{\vert J_s \vert} \mid i=1,\dots,m \}$ 
                
                \Comment{$[\hat{\bm{x}}_i]_J$ is the subvector of $\hat{\bm{x}}_i$} indexed by $J$
                \State compute ${\partial \hat\rho_s}/{\partial \theta}$ using $\hat{\mathcal{D}}_s$ 
            \EndFor
            \State compute ${\partial R_\mathcal{K}}/{\partial \theta}$ using $\{ {\partial \hat\rho_s}/{\partial \theta} \mid s=1,\dots,\vert\mathcal{K}\vert \}$
            \State update $\theta$ using ${\partial L}/{\partial \theta} + \lambda {\partial R_\mathcal{K}}/{\partial \theta}$
        \Until{convergence}
    \end{algorithmic}}%
\end{algorithm}



\section{Related Work}
\label{related}

\subsection{Generative Modeling with Prior Knowledge}

A perspective related to this work is model design based on prior knowledge.
For example, the object-oriented Bayesian network language \citep{Koller97} helps graphical model designs when the structures behind data can be described in an object-oriented way.
Such tools are useful when we can prepare prior knowledge enough for building a full model.
However, it is not apparent how to utilize them when knowledge is only partially given, which is often the case in practice.
In contrast, our method can exploit prior knowledge even if only a part of the structures is known in advance.
Another related perspective is the structure learning of Bayesian networks with constraints from prior knowledge \citep{Cussens17,Li18}.

Posterior regularization (PR) \citep{Ganchev10,Zhu14,Mei14,Hu18} is known as a framework for incorporating prior knowledge into probabilistic models.
In fact, our method can be included in the most general class of PR, which was briefly mentioned in \citep{Zhu14}.
However, practically, existing work \citep{Ganchev10,Zhu14,Mei14,Hu18} only considered limited cases of PR, where constraints were written in terms of a linear monomial of expectation with regard to the target distribution.
In contrast, our method tries to fulfill the constraints on statistical dependence, which needs more complex expressions.

The work by Lopez \emph{et al.} [\citeyear{Lopez18}] should be understood as a kind of technical complement of ours (and \emph{vice versa}).
While Lopez \emph{et al.} [\citeyear{Lopez18}] regularize the amortized inference of VAEs to make \emph{independence of latent variables}, we regularize a general generative model itself to make \emph{dependence of observed variables} compatible with prior knowledge.
In other words, the former regularizes an encoder, whereas the latter regularizes a decoder.
Moreover, our method is more flexible than \cite{Lopez18} in terms of applicable type of knowledge; they considered only independence, but our method can incorporate both dependence and independence.
Also, while Lopez \emph{et al.} [\citeyear{Lopez18}] discussed the method only for VAEs, our method applies to any generative models as long as the mild assumptions in Section~\ref{target} are satisfied.

\subsection{Knowledge of Feature Dependence}
\label{useofknowledge}

In fact, the use of knowledge of feature relation has been discussed in different or more specific contexts.
In natural language processing, feature similarity is often available as the similarity between words.
Xie \emph{et al.} \citep{Xie15} exploited correlations of words for topic modeling, and Liu \emph{et al.} \citep{Liu15} incorporated semantic similarity of words in learning word embeddings.
These are somewhat related to generative modeling, but the scope of data and model is limited.
Moreover, there are several pieces of research on utilizing feature similarity \citep{Krupka07,Li08,Sandler09,Li17,Mollaysa17} for discriminative problems.
Despite these interests, there have been surprisingly few studies on the use of such knowledge for generative modeling in general.



\begin{figure}[t]
    \centering
    \hfill
    \begin{minipage}[t]{0.33\linewidth}
        \centering
        \vspace*{-2mm}
        \includegraphics[clip,height=2.1cm]{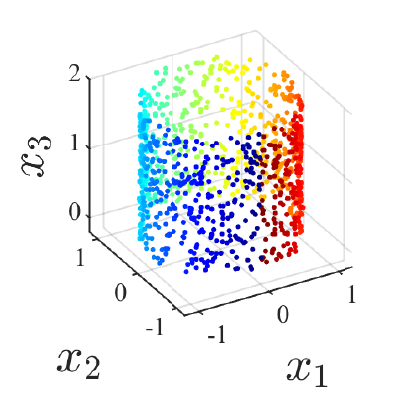}
        \vspace*{-1mm}
        \caption{\textsc{Toy} dataset.}
        \label{fig:unitoy}
    \end{minipage}
    \hfill
    \begin{minipage}[t]{0.6\linewidth}
        \centering
        \vspace*{0mm}
        \includegraphics[clip,height=1.8cm]{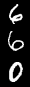}
        \includegraphics[clip,height=1.8cm]{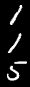}
        \includegraphics[clip,height=1.8cm]{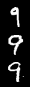}
        \includegraphics[clip,height=1.8cm]{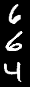}
        \includegraphics[clip,height=1.8cm]{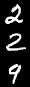}
        \includegraphics[clip,height=1.8cm]{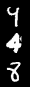}
        \caption{\textsc{ccMNIST} dataset.}
        \label{fig:ccmnist}
    \end{minipage}
    \hfill
\end{figure}

\section{Experiments}
\label{expt}


\subsection{Datasets and Prior Knowledge}
\label{expt:data}

We used the following four datasets, for which we can prepare plausible prior knowledge of feature relations.

\paragraph{Toy data.}
We created a cylinder-like toy dataset as exemplified in Figure~\ref{fig:unitoy}.
On this dataset, we know that $x_1$ and $x_2$ are more statistically dependent than $x_1$ and $x_3$ are.
This knowledge is expressed in the manner of \eqref{eq:def} as
\begingroup\makeatletter\def\f@size{9}\check@mathfonts
\begin{equation*}
    \mathcal{K}_\text{Toy} = \big\{ ( J^\text{ref}_1=\{1\}, \ J^+_1=\{2\}, \ J^-_1=\{3\} ) \big\}.
\end{equation*}
\endgroup

\paragraph{Constrained concatenated MNIST.}
We created a new dataset from MNIST, namely \emph{constrained concatenated MNIST} (\textsc{ccMNIST}, examples in Figure~\ref{fig:ccmnist}) as follows.
Each image in this dataset was generated by vertically concatenating three MNIST images.
Here, the concatenation is constrained so that the top and middle images have the same label, whereas the bottom image is independent of the top two.
We created training and validation sets from MNIST's training dataset and a test set from MNIST's test dataset.
On this dataset, we can anticipate that the top third and the middle third of each image are more dependent than the top and the bottom are.
If each image is vectorized in a row-wise fashion as a $2352$-dim vector, this knowledge is expressed as
\begingroup\makeatletter\def\f@size{9}\check@mathfonts
\begin{equation*}\begin{aligned}
    \mathcal{K}_\text{ccMNIST} = \big\{(
        &J^\text{ref}_1 = \{1,...,784\},
        \
        J^+_1 = \{785,...,1568\},
        \\
        &J^-_1 = \{1569,...,2352\}
    ) \big\}.
\end{aligned}\end{equation*}
\endgroup

\paragraph{Plant sensor data.}
We used a simulated sensor dataset (\textsc{Plant}), which corresponds to the example in Figure~\ref{fig:knowledge:partial}.
The simulation is based on a real industrial chemical plant called the Tennessee Eastman process \citep{Downs93}.
In the plant, there are four major unit operations: reactor, vapor-liquid separator, recycle compressor, and product stripper.
On each unit operation, sensors such as level sensors and thermometers are attached.
We used readings of 22 sensors.
On this dataset, we can anticipate the relative dependence between sets of sensors based on the process diagram of the Tennessee Eastman process \citep{Downs93}.
The knowledge set for this dataset ($\mathcal{K}_\text{Plant}$) contains, e.g.,
\begingroup\makeatletter\def\f@size{9}\check@mathfonts
\begin{equation*}\begin{aligned}
    &(J^\text{ref}_1=J_\text{compressor}, \ J^+_1=J_\text{reactor}, \ J^-_1=J_\text{stripper}), \dots 
\end{aligned}\end{equation*}
\endgroup
where $J_\text{compressor}$ is the set of indices of sensors attached to the compressor, and so on.
This is plausible because the reactor is directly connected to the compressor while the stripper is not.
We prepared $\mathcal{K}_\text{Plant}$ comprising $\vert\mathcal{K}_\text{Plant}\vert=12$ relations based on the structure of the plant \cite{Downs93}.

\paragraph{Solar energy production data.}
We used the records of solar power production\footnote{\url{www.nrel.gov/grid/solar-power-data.html}} (\textsc{Solar}) of 137 solar power plants in Alabama in June 2006.
The data of the first 20 days were used for training, the next five days were for validation, and the last five days were for test.
This dataset corresponds to the example in Figure~\ref{fig:knowledge:sideinfo} because we can anticipate the dependence of features from the pairwise distances between the solar plants.
We created a knowledge set $\mathcal{K}_\text{Solar}$ as follows;
for the $i$-th plant ($i=1,\dots,137$), if the nearest plant is within 10 [km] and the distance to the second-nearest one is more than 12 [km], then add $(\{i\}, \{j_i\}, \{k_i\})$ to $\mathcal{K}_\text{Solar}$, where $j_i$ and $k_i$ are the indices of the nearest and the second-nearest plants, respectively.
This resulted in $\vert\mathcal{K}_\text{Solar}\vert=31$.

\begin{table}[t]
    \centering
    \setlength{\tabcolsep}{6pt}
    {\footnotesize\begin{tabular}{ccccc}
        \toprule
        {$\dim(\bm{z})$} & {L2 only} & {out-layer} & {proposed} & {\scriptsize dedicated decoder} \\
        \midrule
        25 & $332$ ($3.4$) & $331$ ($3.6$) & $\mathbf{321}$ ($2.4$) & $\mathbf{325}$ ($1.5$) \\
        50 & $326$ ($3.3$) & $325$ ($3.1$) & $\mathbf{312}$ ($2.6$) & $\mathbf{301}$ ($4.3$) \\
        100 & $332$ ($2.9$) & $329$ ($2.1$) & $\mathbf{317}$ ($2.0$) & $\mathbf{305}$ ($5.9$) \\
        200 & $333$ ($4.4$) & $333$ ($3.2$) & $\mathbf{322}$ ($2.4$) & $\mathbf{305}$ ($4.2$) \\
        \bottomrule
    \end{tabular}}%
    \caption{Test mean cross-entropy of VAEs on \textsc{ccMNIST}. Only the case of $\dim(\text{MLP})=2^{10}$ is reported, and the other cases are similar.}
    \label{tab:vae:crossent}
\end{table}

\begin{table}[t]
    \vspace*{0mm}
    \centering
    \setlength{\tabcolsep}{7pt}
    {\footnotesize\begin{tabular}{ccccc}
        \toprule
        dataset & $\dim(\bm{z})$ & L2 only & out-layer & proposed \\
        \midrule
        \textsc{Toy} & 4 & $0.76$ ($.15$) & $\mathbf{0.41}$ ($.08$) & $\mathbf{0.41}$ ($.07$) \\ 
        \midrule
        \multirow{3}*{\textsc{Plant}}
        & 4 & $8.73$ ($.17$) & --- & $8.67$ ($.10$) \\
        & 7 & $8.21$ ($.15$) & --- & $\mathit{8.06}$ ($.16$) \\
        & 11 & $8.11$ ($.23$) & --- & $\mathit{7.95}$ ($.17$) \\
        \midrule
        \multirow{3}*{\textsc{Solar}}
        & 4 & $4.51$ ($.96$) & $4.32$ ($.46$) & $\mathbf{2.57}$ ($.33$) \\
        & 13 & $3.22$ ($.54$) & $3.15$ ($.47$) & $\mathbf{1.97}$ ($.16$) \\
        & 54 &  $2.73$ ($.43$) & $2.71$ ($.49$) & $\mathbf{1.90}$ ($.16$) \\
        \bottomrule
        \multicolumn{5}{r}{{\scriptsize Significant diff. from ``L2 only'' with  \ \textit{italic}: $p<.01$ \ or \ \textbf{bold}: $p<.001$.}}
    \end{tabular}}%
    \caption{Test mean per-feature reconstruction errors by VAEs on the three datasets. Here only the cases of $\dim(\text{MLP})=2^5,2^6,2^7$ (resp. for the three datasets) are reported. Other cases are similar.}
    \label{tab:vae:recerr}
\end{table}

\subsection{Settings}

\paragraph{Models.}
We used factor analysis (FA, see also \citet{Zhang09}), VAE, and GAN.
We trained FA and VAEs on all datasets and GANs on the \textsc{Toy} dataset.
For VAEs and GANs, encoders, decoders, and discriminators were modeled using multi-layer perceptrons (MLPs) with three hidden layers.
Hereafter we denote the dimensionality of VAE/GAN's latent variable by $\dim(\bm{z})$ and the dimensionality of MLP's hidden layer by $\dim(\text{MLP})$.
We tried $\dim(\text{MLP})=2^5,2^6,\dots,2^{10}$ and used several values of $\dim(\bm{z})$ for the different datasets.

\paragraph{Baselines.}
As the simplest baseline, we trained every model only with L2 regularization (i.e., weight decay).
We tried another baseline, in which the weight of the last layer of decoder MLPs are regularized using $\mathcal{K}$; e.g., for $\mathcal{K}_\text{Toy} = \{ \{1\}, \{2\}, \{3\} \}$, we used a regularization term
\begingroup\makeatletter\def\f@size{9}\check@mathfonts
\begin{equation*}
    R^\text{out-layer}_{\mathcal{K}_\text{toy}} := \max(0, \ \epsilon + \Vert \bm{w}_1-\bm{w}_2 \Vert_2^2 - \Vert \bm{w}_1-\bm{w}_3 \Vert_2^2),
\end{equation*}
\endgroup
where $\bm{w}_i$ denotes the $i$-th row of the weight matrix of the last layer of the decoder MLP, and $\epsilon$ was tuned in the same manner as $\nu_\alpha$.
We refer to this baseline as ``out-layer.''
It cannot be applied to the \textsc{Plant} dataset because the feature sets in $\mathcal{K}_\text{Plant}$ do not have any one-to-one correspondences.

\paragraph{Hyperparameters.}
We computed HSIC with $m=128$ using Gaussian kernels with the bandwidth set by the median heuristics.
No improvement was observed with $m>128$.
The hyperparameters were chosen based on the performance on the validation sets.
The search was not intensive; $\lambda$ was chosen from three candidate values that roughly adjust orders of $L$ and $R_\mathcal{K}$, and $\nu_\alpha$ was chosen from .01 or .05.

\subsection{Results}

Below, the quantitative results are reported mainly with cross entropy (for \textsc{ccMNIST}) or reconstruction errors (for the others) as they are a universal performance criterion to examine the generalization capability.
For VAEs, the significance of the improvement by the proposed method did not change even when we examined the ELBO values.

\paragraph{Evaluation by test set performance.}
The test set performance of VAEs are shown in Tables~\ref{tab:vae:crossent} and \ref{tab:vae:recerr}.
We can observe that, while the improvement by the out-layer baseline was quite marginal, the proposed regularization method resulted in significant improvement.
The performance of FA slightly improved (details omitted due to space limitations).

\paragraph{Comparison to a dedicated model.}
We compared the performance of the proposed method to that of a model designed specifically for \textsc{ccMNIST} dataset (termed ``dedicated'').
The dedicated model uses an MLP designed in accordance with the prior knowledge that the top and the middle parts are from the same digit.
The performance of the dedicated model is shown in the right-most column of Table~\ref{tab:vae:crossent}.
We can observe that the proposed method performed intermediately between the most general case (L2 only) and the dedicated model.
As designing models meticulously is time-consuming and often even infeasible, the proposed method will be useful to give a trade-off between performance and a user's workload.


\begin{figure}[t]
    \begin{minipage}[t]{0.48\linewidth}
        \centering
        \vspace*{0pt}
        \begin{tikzpicture}[]
            \tikzstyle{every node}=[font=\scriptsize]
            \begin{axis}[
                width=4cm, height=3.2cm,
                xlabel={$\vert \mathcal{K}_\text{Solar} \vert$},
                ylabel={Test rec. errors},
                xlabel near ticks,
                ylabel near ticks,
                ymajorgrids=true, xmajorgrids=true]
                \addplot+[error bars/.cd, y dir=both, y explicit] table[y error index=2] {vae2_solar_ratios.txt};
            \end{axis}
        \end{tikzpicture}
        \caption{Performance of VAEs learned on the \textsc{Solar} dataset with $\mathcal{K}_\text{Solar}$ of different sizes.}
        \label{fig:ratios}
    \end{minipage}
    \hfill
    \begin{minipage}[t]{0.45\linewidth}
        \centering
        \vspace*{-0.5mm}
        \includegraphics[clip,height=1.38cm]{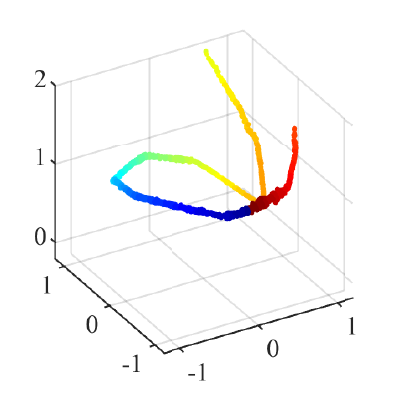}
        \hspace{1mm}
        \includegraphics[clip,height=1.38cm]{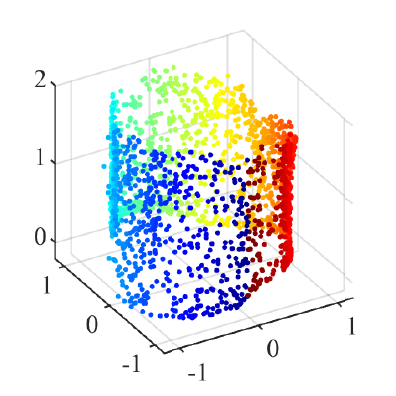}
        \\
        \vspace*{-1mm}
        \includegraphics[clip,height=1.38cm]{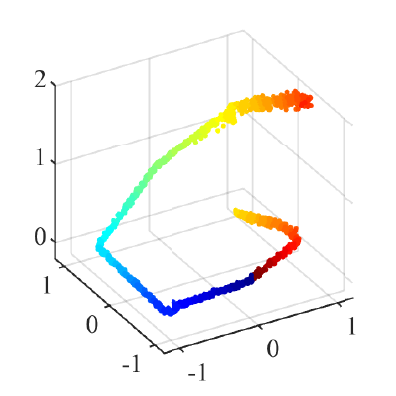}
        \hspace{1mm}
        \includegraphics[clip,height=1.38cm]{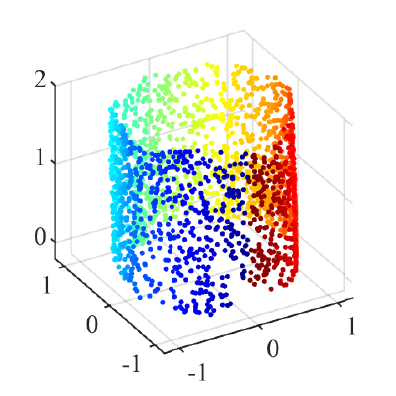}
        \vspace*{-1mm}
        \caption{Samples from GAN: (\emph{left}) w/o and (\emph{right}) with the proposed method.}
        \label{fig:toy}
    \end{minipage}
\end{figure}

\paragraph{Effect of knowledge set size.}
Another interest lies in how the amount of provided prior knowledge affects the performance of the regularizer.
We investigated this by changing the number of tuples in $\mathcal{K}_\text{Solar}$ used in learning VAEs.
We prepared knowledge subsets by extracting some tuples from the original set $\mathcal{K}_\text{Solar}$.
When creating the subsets, tuples were chosen so that a larger subset contained all elements of the smaller ones.
Figure~\ref{fig:ratios} shows the test set performances along with the subset size.
The performance is improved with a larger knowledge set.
For example, the difference between $\vert\mathcal{K}\vert=6$ and $\vert\mathcal{K}\vert=31$ is significant with $p\approx.003$.

\paragraph{Inspection of generated samples.}
We inspected the samples drawn from GANs trained on the \textsc{Toy} dataset.
When the proposed regularizer was not applied, we often observed a phenomenon similar to ``mode collapse'' (see, e.g., \cite{Metz17}) occurred as in the left-side plots of Figure~\ref{fig:toy}.
In contrast, the whole geometry of the \textsc{Toy} dataset was always captured successfully when the proposed regularizer was applied, as in the right-side plots of Figure~\ref{fig:toy}.

\paragraph{Running time.}
For example, the time needed for training a VAE for 50 epochs on the \textsc{ccMNIST} dataset was 150 or 117 seconds, with or without the proposed method, respectively.
As the complexity is linear in $\vert\mathcal{K}\vert$, this would not be prohibitive, while any speed-up techniques will be useful.


\section{Conclusion}
\label{concl}

In this work, we developed a method for regularizing generative models using prior knowledge of feature dependence, which is frequently encountered in practice and has been studied in contexts other than generative modeling.
The proposed regularizer can be incorporated in off-the-shelf learning methods of many generative models.
The direct extension of the current method includes the use of higher-order dependence between multiple sets of features \citep{Pfister17}.


\section*{Acknowledgments}
This work was supported by JSPS KAKENHI Grant Numbers JP19K21550, JP18H06487, and JP18H03287.

{\fontsize{10pt}{10.1pt}\selectfont\bibliography{main}
\bibliographystyle{named}}


\appendix

\section{Detailed settings of experiments}

\subsection{Datasets and knowledge sets}

\subsubsection{Toy data}

In creating the \textsc{Toy} data, we generated 1,000 samples for a training set, 2,000 samples for a validation set, and another 2,000 samples for a test set.

\subsubsection{Concatenated MNIST}

In creating the \textsc{ccMNIST} data, we generated a training set of size 20,000 and a validation set of size 10,000 from MNIST's training dataset and a test set of size 10,000 from MNIST's test dataset.

\subsubsection{Plant sensor data}

\paragraph{Dataset}
We used the Tennessee Eastman (TE) process data available online\footnote{\url{github.com/camaramm/tennessee-eastman-profBraatz} [retrieved 30 December 2018].}, which were generated using \texttt{teprob.f}\footnote{The code and the related materials are also archived online at \url{depts.washington.edu/control/LARRY/TE/download.html}.} originally provided by the authors of \citep{Downs93}.
We downloaded \texttt{d00.dat} and \texttt{d00\_te.dat}, which contained the data generated with the normal operating condition, and created three sets from them.
We used the whole \texttt{d00.dat} for a training set (size 500) and divided \texttt{d00\_te.dat} into a validation set (size 400) and a test set (size 560).
The original data consist of 12 manipulated variables and 41 process measurement variables.
Within the process measurement variables, we used the measurements of the level sensors, flow rate sensors, thermometers, and manometers, which finally composed the 22-dimensional data.
As preprocessing, we normalized the data using the values of average and standard deviation of each variable of the training set. Moreover, we added noise following $\mathcal{N}(0,10^{-4})$.

\paragraph{Knowledge set}
Based on the sensor locations shown in the process diagram of the TE process \citep{Downs93}, we classified each of the 22 variables into one of the following seven groups: Feed~1, Feed~2, Reactor, Vapor-liquid separator, Recycle compressor, Product stripper, and Purge.
This grouping is built upon the unit structure of the TE process.
The units are interconnected in the process as shown in Figure~\ref{fig:tep}, according to which we designed the knowledge set, $\mathcal{K}_\text{Plant}$, as shown in the below of Figure~\ref{fig:tep}.

\begin{figure*}[p]
    \centering
    \begin{minipage}{\textwidth}
        \centering
        \includegraphics[width=14cm,clip]{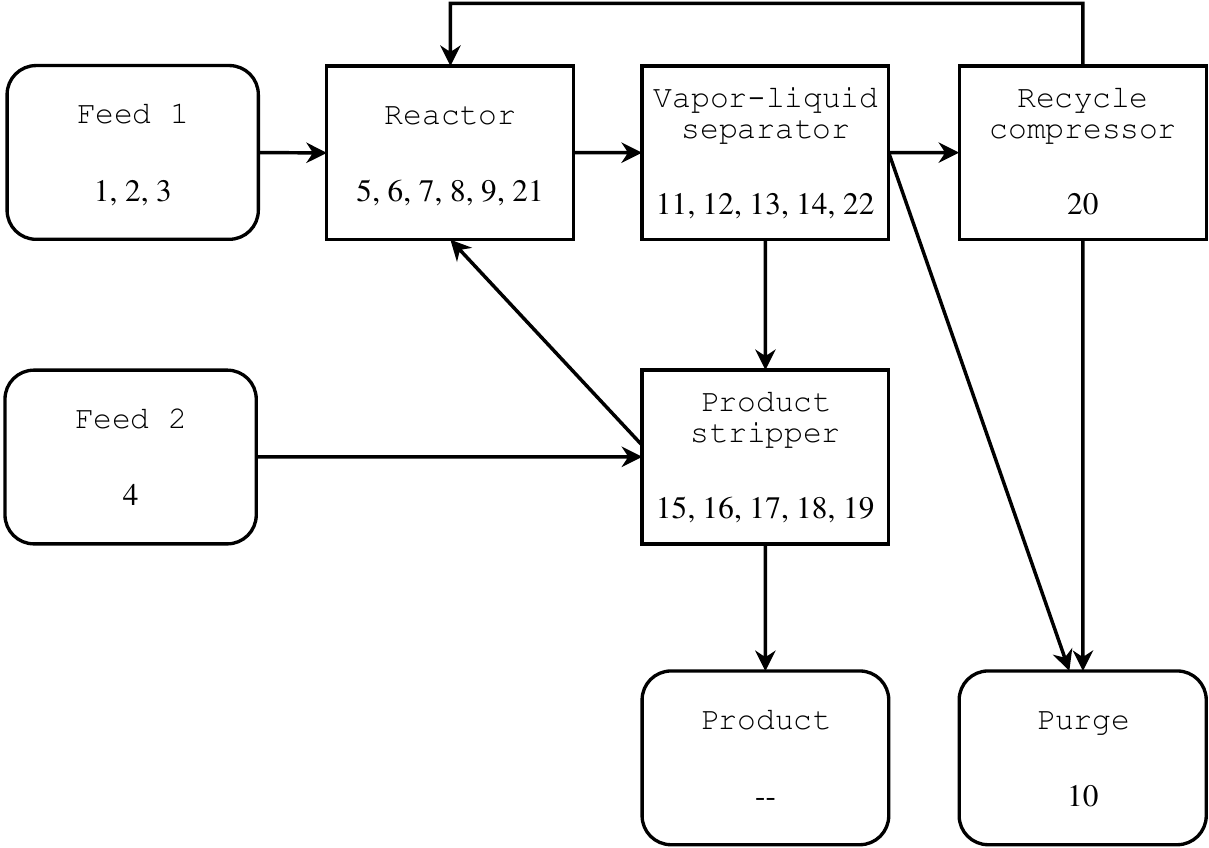}
        \caption{Diagram of the units in the TE process drawn by the authors referring to the original one. The numbers below the unit names are the sensor indices corresponding to each unit. The grouping of the sensors is by the authors of this manuscript and not necessarily complete nor general. Correspondingly, the following knowledge set was used in the experiments.}
        \label{fig:tep}
    \end{minipage}
    \par
    \vspace*{2ex}
    \begin{minipage}{\textwidth}
        \begin{equation*}\begin{aligned}
            \mathcal{K}_\text{Solar} = \big\{&~
            (J_\text{Reactor}, \ J_\text{Separator}, \ J_\text{Feed2}), ~~
            (J_\text{Reactor}, \ J_\text{Separator}, \ J_\text{Purge}), ~~
            (J_\text{Separator}, \ J_\text{Stripper}, \ J_\text{Feed1}),
            \\ &~
            (J_\text{Separator}, \ J_\text{Stripper}, \ J_\text{Feed2}), ~~
            (J_\text{Separator}, \ J_\text{Compressor}, \ J_\text{Feed1}), ~~
            (J_\text{Separator}, \ J_\text{Compressor}, \ J_\text{Feed2}),
            \\ &~
            (J_\text{Compressor}, \ J_\text{Reactor}, \ J_\text{Feed1}), ~~
            (J_\text{Compressor}, \ J_\text{Reactor}, \ J_\text{Feed2}), ~~
            (J_\text{Compressor}, \ J_\text{Reactor}, \ J_\text{Stripper}),
            \\ &~
            (J_\text{Stripper}, \ J_\text{Reactor}, \ J_\text{Feed1}), ~~
            (J_\text{Stripper}, \ J_\text{Reactor}, \ J_\text{Purge}), ~~
            (J_\text{Stripper}, \ J_\text{Reactor}, \ J_\text{Compressor})
            ~ \big\},
        \end{aligned}\end{equation*}
        where
        \begin{gather*}
            J_\text{Feed1} = \{1,2,3\}, \quad
            J_\text{Feed2} = \{4\}, \quad
            J_\text{Reactor} = \{5,6,7,8,9,21\}, \quad
            J_\text{Separator} = \{11,12,13,14,22\}, \\
            J_\text{Stripper} = \{15,16,17,18,19\}, \quad
            J_\text{Compressor} = \{20\}, \quad\text{and}\quad
            J_\text{Purge} = \{10\}.
        \end{gather*}
    \end{minipage}
\end{figure*}

\subsubsection{Solar}

\paragraph{Dataset}
The \textsc{Solar} dataset was created from the data provided at the website of National Renewable Energy Laboratory\footnote{\url{www.nrel.gov/grid/solar-power-data.html} [retrieved 9 January 2019].}.
We downloaded the set of csv files (\texttt{al-pv-2006.zip}) containing power production records of solar plants in Alabama state and concatenated the contents of the 137 files as the 137-dimensional dataset.
We used the data of June 2006 because the seasonal variation seemed moderate in that period.
We subsampled the original data (sampled every five minutes) by 1/3 and divided them into a training set (days 1--20), a validation set (days 21--25), and a test set (days 26--30).
Finally the size of each set is 1,920, 480, and 480, respectively.
Because the data contain many zero values, we added $10^{-3}$ to the whole datasets and took natural logarithm, which resulted in the data valued approximately from $-6$ to $4$.
As preprocessing, we normalized the data using the values of average and standard deviation of each variable of the training set. Moreover, we added noise following $\mathcal{N}(0,10^{-4})$.

\paragraph{Knowledge set}
Knowledge set was created following scheme presented in the main manuscript.


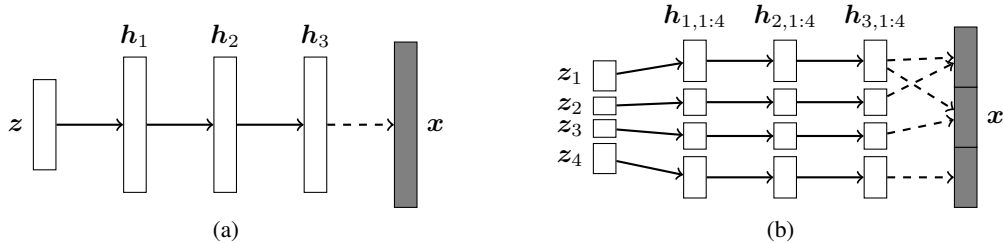
\begin{figure*}[t]
    \centering
    \subfloat[]{
        \begin{tikzpicture}
            \node (rect) [fill=white,minimum width=.3cm,minimum height=1.2cm,draw,label=left:{$\bm{z}$}] (z) at (0,0) {};
            \node (rect) [fill=white,minimum width=.3cm,minimum height=1.8cm,draw,label=above:{$\bm{h}_1$}] (h1) at (1.2,0) {};
            \node (rect) [fill=white,minimum width=.3cm,minimum height=1.8cm,draw,label=above:{$\bm{h}_2$}] (h2) at (2.4,0) {};
            \node (rect) [fill=white,minimum width=.3cm,minimum height=1.8cm,draw,label=above:{$\bm{h}_3$}] (h3) at (3.6,0) {};
            \node (rect) [fill=gray,minimum width=.3cm,minimum height=2.2cm,draw,label=right:{$\bm{x}$}] (x) at (4.8,0) {};
            \draw[thick,->] (z) -- (h1);
            \draw[thick,->] (h1) -- (h2);
            \draw[thick,->] (h2) -- (h3);
            \draw[thick,->,dashed] (h3) -- (x);
        \end{tikzpicture}
        \label{fig:spdec:gen}
    }
    \hspace{1cm}
    \subfloat[]{
        \begin{tikzpicture}
            \node (rect) [fill=white,minimum width=.3cm,minimum height=0.4cm,draw,label=left:{$\bm{z}_1$}] (z1) at (0,0.5) {};
            \node (rect) [fill=white,minimum width=.3cm,minimum height=0.2cm,draw,label=left:{$\bm{z}_2$}] (z2) at (0,0.1) {};
            \node (rect) [fill=white,minimum width=.3cm,minimum height=0.2cm,draw,label=left:{$\bm{z}_3$}] (z3) at (0,-0.2) {};
            \node (rect) [fill=white,minimum width=.3cm,minimum height=0.4cm,draw,label=left:{$\bm{z}_4$}] (z4) at (0,-0.6) {};
            \node (rect) [fill=white,minimum width=.3cm,minimum height=0.55cm,draw,label=above:{$\bm{h}_{1,1:4}$}] (h11) at (1.2,0.7) {};
            \node (rect) [fill=white,minimum width=.3cm,minimum height=0.35cm,draw] (h12) at (1.2,0.15) {};
            \node (rect) [fill=white,minimum width=.3cm,minimum height=0.35cm,draw] (h13) at (1.2,-0.3) {};
            \node (rect) [fill=white,minimum width=.3cm,minimum height=0.55cm,draw] (h14) at (1.2,-0.85) {};
            \node (rect) [fill=white,minimum width=.3cm,minimum height=0.55cm,draw,label=above:{$\bm{h}_{2,1:4}$}] (h21) at (2.4,0.7) {};
            \node (rect) [fill=white,minimum width=.3cm,minimum height=0.35cm,draw] (h22) at (2.4,0.15) {};
            \node (rect) [fill=white,minimum width=.3cm,minimum height=0.35cm,draw] (h23) at (2.4,-0.3) {};
            \node (rect) [fill=white,minimum width=.3cm,minimum height=0.55cm,draw] (h24) at (2.4,-0.85) {};
            \node (rect) [fill=white,minimum width=.3cm,minimum height=0.55cm,draw,label=above:{$\bm{h}_{3,1:4}$}] (h31) at (3.6,0.7) {};
            \node (rect) [fill=white,minimum width=.3cm,minimum height=0.35cm,draw] (h32) at (3.6,0.15) {};
            \node (rect) [fill=white,minimum width=.3cm,minimum height=0.35cm,draw] (h33) at (3.6,-0.3) {};
            \node (rect) [fill=white,minimum width=.3cm,minimum height=0.55cm,draw] (h34) at (3.6,-0.85) {};
            \node (rect) [fill=gray,minimum width=.3cm,minimum height=0.8cm,draw] (x1) at (4.8,-0.85) {};
            \node (rect) [fill=gray,minimum width=.3cm,minimum height=0.8cm,draw,label=right:{$\bm{x}$}] (x2) at (4.8,-0.05) {};
            \node (rect) [fill=gray,minimum width=.3cm,minimum height=0.8cm,draw] (x3) at (4.8,0.75) {};
            \draw[thick,->] (z1) -- (h11);
            \draw[thick,->] (h11) -- (h21);
            \draw[thick,->] (h21) -- (h31);
            \draw[thick,->] (z2) -- (h12);
            \draw[thick,->] (h12) -- (h22);
            \draw[thick,->] (h22) -- (h32);
            \draw[thick,->] (z3) -- (h13);
            \draw[thick,->] (h13) -- (h23);
            \draw[thick,->] (h23) -- (h33);
            \draw[thick,->] (z4) -- (h14);
            \draw[thick,->] (h14) -- (h24);
            \draw[thick,->] (h24) -- (h34);
            \draw[thick,->,dashed] (h31) -- (x3);
            \draw[thick,->,dashed] (h32) -- (x3);
            \draw[thick,->,dashed] (h31) -- (x2);
            \draw[thick,->,dashed] (h33) -- (x2);
            \draw[thick,->,dashed] (h34) -- (x1);
        \end{tikzpicture}
        \label{fig:spdec:sp}
    }
    \caption{Architectures of (a) the decoder modeled by a general fully-connected MLP and (b) the decoder specially designed for \textsc{ccMNIST} dataset. In both diagrams, $\bm{z}$ is the latent variable of VAE, $\bm{h}$'s are the hidden layers of the decoder, and $\bm{x}$ denotes the observed variable. The solid arrows represent affine transformation followed by the ReLU function, and the dashed arrows represent affine transformation.}
    \label{fig:spdec}
\end{figure*}

\subsection{Generative models}

\subsubsection{Factor analysis model}

In the factor analysis (FA) model we used, the observed variable, $\bm{x}\in\mathbb{R}^d$, is modeled as
\begin{equation}
    \bm{x} \approx \bm{W} \bm{z} + \bm\mu,
\end{equation}
where $\bm{z}\in\mathbb{R}^p$ ($p \leq d$) is a latent variable, which is inferred by
\begin{equation}
    \tilde{\bm{z}} = \bm{W}^\mathsf{T} (\bm{x}-\bm\mu).
\end{equation}
The parameters, $\bm{W}$ and $\bm\mu$, are learned by minimizing the following square loss:
\begin{equation}
    \Vert \bm{x} - \bm{W} \bm{W}^\mathsf{T} (\bm{x}-\bm\mu) - \bm\mu \Vert_2^2.
\end{equation}
This model can be regarded as a linear autoencoder with the constrained encoder and encoder.
Other detailed settings are described in Section~\ref{exptsetting}.

\subsubsection{Variational autoencoder}

A variational autoencoder (VAE) \citep{Kingma14} is learned via maximization of the evidence lower bound (ELBO):
\begin{multline}
    \text{ELBO} =
    \mathbb{E}_{q_{\phi_\text{enc}}(\bm{z} \mid \bm{x})} [ \log p_{\theta_\text{dec}}(\bm{x} \mid \bm{z}) ]
    \\
    - \mathrm{KL}( q_{\phi_\text{enc}}(\bm{z} \mid \bm{x}) \ \Vert \ p_{\theta_\text{dec}}(\bm{x} \mid \bm{z}) ),
\end{multline}
where $p_{\theta_\text{dec}}(\bm{x} \mid \bm{z})$ and $q_{\phi_\text{enc}}(\bm{z} \mid \bm{x})$ are a decoder and an encoder having sets of parameters, $\theta_\text{dec}$ and $\phi_\text{enc}$, respectively.
We modeled the encoder and the decoder using multi-layer perceptrons (MLPs) in every experiment.
Other detailed settings are described in Section~\ref{exptsetting}.

\subsubsection{Generative adversarial network}

A generative adversarial network (GAN) \citep{Goodfellow14} is learned via minimization of the following cross-entropy loss:
\begin{equation}
    \log D_{\phi_D} (\bm{x}) + \log \big( 1 - D_{\phi_D} \big( G_{\theta_G} (\bm{z}) \big) \big),
\end{equation}
where $G_{\theta_G}$ and $D_{\phi_D}$ are a generator and a discriminator with having sets of parameters, $\theta_G$ and $\phi_D$, respectively.
In GANs, the generator generates fake samples while the discriminator tries to distinguish the fake and the real data.
We modeled the generator and the discriminator using MLPs.
Other detailed settings are described in Section~\ref{exptsetting}.


\subsection{Experimental settings}
\label{exptsetting}

For settings that are not described below, we used the default values of PyTorch 0.4.1.

\subsubsection{Hyperparameter tuning}

The hyperparameters of the proposed method were chosen in accordance with the validation set performance.
The set of candidates for $\nu_\alpha$ was just $\{.01, \ .05\}$ for every experiment.
The sets of candidates for $\lambda$ are shown in Table~\ref{tab:cand}. The candidate values of $\lambda$ were set so that the orders of the original loss function and the regularizer were roughly adjusted.

\subsubsection{Architectures}

The encoder / decoder of VAEs and the generator / discriminator of GANs were modeled using MLPs.
Each MLP has fully-connected three hidden layers, each of which has the same number of units.
The dimensionality of $\bm{z}$ and the number of units of MLP's hidden layers are summarized in Table~\ref{tab:dim}.
The dimensionality of $\bm{z}$ was determined basically in accordance with the cumulative contributing rates (i.e., explained variance) of PCA on the \textsc{Plant} and \textsc{Solar} datasets.
As for the \textsc{Plant} dataset, 4 principal components (PCs) explain 99\% variance, 7 PCs explain 99.9\%, and 11 PCs explain 99.99\%; analogously for the \textsc{Solar} dataset.

For the MLPs, we used ReLU as the activation function except for the VAE on the \textsc{Solar} dataset.
For the VAE on the \textsc{Solar} dataset, we used $\operatorname{tanh}$.

In the experiment of VAE on the \textsc{ccMNIST} dataset, for a reference of performance, we also tried a dedicated decoder architecture.
In the dedicated decoder, the latent variable, $\bm{z}$, was divided into four parts as $\bm{z}=[\bm{z}_1^\mathsf{T} ~ \bm{z}_2^\mathsf{T} ~ \bm{z}_3^\mathsf{T} ~ \bm{z}_4^\mathsf{T}]^\mathsf{T}$.
Then, the top third of each image is modeled only with $\bm{z}_1$ and $\bm{z}_2$, the middle third is with $\bm{z}_1$ and $\bm{z}_3$, and the bottom third is with $\bm{z}_4$.
The detail of the dedicated decoder is shown in Figure~\ref{fig:spdec:sp}.
This dedicated structure reflects the fact that the top and the middle parts of the images are from the same digit while the bottom is independent from the top two.

\subsubsection{Optimization}

Every optimization was done using Adam optimizer.
$\alpha$, one of the parameters of Adam, was basically 0.001 with a few exception; we used $\alpha=0.0005$ for learning the FA model on the \textsc{Plant} dataset and for the GAN on the \textsc{Toy} dataset.
Also, we used $\beta_1=0.5$ for GANs and $\beta_1=0.9$ otherwise. 


\section{Detailed experimental results}

\subsection{Computational speed}

We examined the computational time of training VAEs (using a Tesla V100 and the implementation based on PyTorch) with and without the proposed regularization.
We show the runtime of training VAE on the \textsc{ccMNIST} dataset in Table~\ref{tab:speed1} and the \textsc{Solar} dataset in Table~\ref{tab:speed2}.
The settings are respectively from Sections 5.3.1 and 5.3.3.
In both tables, the averages over 10 trials are shown.
In summary, training with the proposed regularizer is not prohibitively slow compared to the case without the regularizer ($q=0$ in Table~\ref{tab:speed1} and $\vert \mathcal{K} \vert=0$ in Table~\ref{tab:speed2}).
Also, the runtime is not so sensitive to $q>0$ and admissible for a middle-sized knowledge set.

\subsection{Evaluation by test set performance}

In Tables~\ref{tab:result1}, \ref{tab:result2}, and \ref{tab:result3}, we present the full version of the experimental results introduced in the main manuscript.

\begin{table*}[p]
    \centering
    \begin{minipage}{\textwidth}
        \centering
        \caption{Sets of candidates for hyperparameter $\lambda$, from which $\lambda$ was chosen by validation performance.}\label{tab:cand}
        \begin{tabular}{ccc}
            \toprule
            Model & Dataset & Candidates for $\lambda$ \\
            \midrule
            \midrule
            \multirow{3}*{FA} & \textsc{ccMNIST} & $5\times10^3$, $10^4$, or $2\times10^4$ \\
            & \textsc{Plant} & $2.5\times10^4$, $5\times10^4$, or $10^5$ \\
            & \textsc{Solar} & $5\times10^4$, $10^5$, or $2\times10^5$ \\
            \midrule
            \multirow{4}*{VAE} & \textsc{Toy} & $250$, $500$, or $1000$  \\
            & \textsc{ccMNIST} & $5\times10^3$, $10^4$, or $2\times10^4$ \\
            & \textsc{Plant} & $2.5\times10^4$, $5\times10^4$, or $10^5$ \\
            & \textsc{Solar} & $2.5\times10^5$, $5\times10^5$, or $10^6$ \\
            \bottomrule
        \end{tabular}
    \end{minipage}
    \par
    \vspace*{8ex}
    \begin{minipage}{\textwidth}
        \centering
        \caption{Settings of dimensionality of $\bm{z}$ and MLP's hidden layers used in the experiments.}\label{tab:dim}
        \begin{tabular}{cccc}
            \toprule
            Model & Dataset & $\dim(\bm{z})$ & $\dim(\text{MLP})$ \\
            \midrule
            \midrule
            \multirow{3}*{FA} & \textsc{ccMNIST} & $200$, $800$ & --- \\
            & \textsc{Plant} & $7$, $11$ & --- \\
            & \textsc{Solar} & $13$, $54$ & --- \\
            \midrule
            \multirow{4}*{VAE} & \textsc{Toy} & $4$ & $32$ \\
            & \textsc{ccMNIST} & $25$, $50$, $100$, $200$ & $512$, $1024$, $2048$ \\
            & \textsc{Plant} & $4$, $7$, $11$ & $32$, $64$, $128$ \\
            & \textsc{Solar} & $4$, $13$, $54$ & $128$, $256$, $512$ \\
            \midrule
            GAN & \textsc{Toy} & 4 & 32 \\
            \bottomrule
        \end{tabular}
    \end{minipage}
    \par
    \vspace*{8ex}
    \begin{minipage}{0.58\textwidth}
        \centering
        \vspace*{0cm}
        \caption{Runtime of training VAE on the \textsc{ccMNIST} for 50 epochs and the test performance.}
        \label{tab:speed1}
        \begin{tabular}{ccc}
            \toprule
            $q$ & Runtime [sec] & Test negative ELBO \\
            \midrule
            \midrule
            0 (without reg.) & 117 & $3.71 \times 10^2$ \\
            64 & 148 & $3.64 \times 10^2$ \\
            128 & 150 & $3.63 \times 10^2$ \\
            256 & 154 & $3.62 \times 10^2$ \\
            512 & 161 & $3.62 \times 10^2$ \\
            \bottomrule
        \end{tabular}
    \end{minipage}
    \hfill
    \begin{minipage}{0.38\textwidth}
        \centering
        \vspace*{0cm}
        \caption{Runtime of training VAE on the \textsc{Solar} for 100 epochs.}
        \label{tab:speed2}
        \begin{tabular}{cc}
            \toprule
            $\vert\mathcal{K}\vert$ & Runtime [sec] \\
            \midrule
            \midrule
            0 (without reg.) & 13.0 \\
            6 & 21.2 \\
            12 & 29.0 \\
            19 & 38.0 \\
            25 & 46.1 \\
            31 & 53.8 \\
            \bottomrule
        \end{tabular}
    \end{minipage}
\end{table*}

\begin{table*}[p]
    \centering
    \caption{Test mean per-feature reconstruction errors by FA models. Averages (standard deviations) over 10 random trials are shown.}
    \label{tab:result1}
    \setlength{\tabcolsep}{8pt}
    {\small\begin{tabular}{ccccc}
        \toprule
        \multicolumn{2}{c}{Setting} & \multicolumn{3}{c}{Results} \\
        \cmidrule(lr){1-2}\cmidrule(lr){3-5}
        Dataset & $\dim(\bm{z})$ & L2 only & L2 + out-layer & \textbf{L2 + proposed} \\
        \midrule\midrule
        \multirow{2}*{\textsc{ccMNIST}}
        & {$200$} & $1.970 \times 10^{-5}$ ($3 \times 10^{-9}$) & $1.970 \times 10^{-5}$ ($3 \times 10^{-9}$) & $1.958 \times 10^{-5}$ ($1 \times 10^{-8}$) ${}^{***}$ \\
        & {$800$} & $8.150 \times 10^{-6}$ ($2 \times 10^{-9}$) & $8.148 \times 10^{-6}$ ($3 \times 10^{-9}$) & $7.828 \times 10^{-6}$ ($2 \times 10^{-8}$) ${}^{***}$ \\
        \midrule
        \multirow{2}*{\textsc{Plant}}
        & $7$ & $6.318$ ($0.021$) & --- & $6.323$ ($0.019$) ${}^{~~~~~}$ \\
        & {$11$} & $4.949$ ($0.039$) & --- & $4.936$ ($0.025$) ${}^{~~~~~}$ \\
        \midrule
        \multirow{2}*{\textsc{Solar}}
        & {$13$} & $1.232$ ($0.006$) & $1.232$ ($0.006$) & $1.227$ ($0.005$) ${}^{*~~~}$ \\
        & {$54$} & $0.842$ ($0.004$) & $0.841$ ($0.004$) & $0.831$ ($0.004$) ${}^{***}$ \\
        \bottomrule
    \end{tabular}}%
    \par
    \vspace*{4ex}
    \caption{Test cross-entropy by VAEs on the \textsc{ccMNIST} dataset.}
    \label{tab:result2}
    \setlength{\tabcolsep}{8pt}
    {\small\begin{tabular}{ccccccc}
        \toprule
        \multicolumn{3}{c}{Setting} & \multicolumn{4}{c}{Results} \\
        \cmidrule(lr){1-3}\cmidrule(lr){4-7}
        Dataset & $\dim(\bm{z})$ & $\dim(\text{MLP})$ & L2 only & L2 + out-layer & \textbf{L2 + proposed} & [dedicated decoder] \\
        \midrule\midrule
        \multirow{9}*{\textsc{ccMNIST}}
        & 25 & 512 & $339.1$ ($3.6$) & $339.1$ ($2.7$) ${}^{~~~}$ & $325.1$ ($2.0$) ${}^{***}$ & [$335.5$ ($1.7$)] \\
        & 25 & 1024 & $332.0$ ($3.4$) & $330.5$ ($3.6$) ${}^{~~~}$ & $320.7$ ($2.4$) ${}^{***}$ & [$325.2$ ($1.5$)] \\
        & 25 & 2048 & $328.6$ ($2.6$) & $328.5$ ($2.8$) ${}^{~~~}$ & $321.5$ ($1.5$) ${}^{***}$ & [$324.6$ ($1.6$)] \\
        & 50 & 512 & $339.0$ ($4.8$) & $338.2$ ($3.9$) ${}^{~~~}$ & $321.8$ ($2.4$) ${}^{***}$ & [$326.1$ ($3.2$)] \\
        & 50 & 1024 & $325.8$ ($3.3$) & $325.3$ ($3.1$) ${}^{~~~}$ & $312.2$ ($2.6$) ${}^{***}$ & [$300.9$ ($4.3$)] \\
        & 50 & 2048 & $322.6$ ($3.5$) & $321.7$ ($3.1$) ${}^{~~~}$ & $313.8$ ($2.6$) ${}^{***}$ & [$290.2$ ($1.7$)] \\
        & 100 & 512 & $339.6$ ($1.5$) & $339.6$ ($1.0$) ${}^{~~~}$ & $324.8$ ($3.4$) ${}^{***}$ & [$326.8$ ($4.1$)] \\
        & 100 & 1024 & $331.6$ ($2.9$) & $329.4$ ($2.1$) ${}^{*~~}$ & $317.2$ ($2.0$) ${}^{***}$ & [$304.9$ ($5.9$)] \\
        & 100 & 2048 & $323.5$ ($3.6$) & $323.9$ ($2.8$) ${}^{~~~}$ & $317.4$ ($2.8$) ${}^{**~}$ & [$288.9$ ($2.2$)] \\
        & 200 & 512 & $343.0$ ($2.6$) & $340.4$ ($1.5$) ${}^{**}$ & $329.9$ ($2.2$) ${}^{***}$ & [$328.3$ ($5.4$)] \\
        & 200 & 1024 & $332.7$ ($4.4$) & $332.1$ ($3.2$) ${}^{~~~}$ & $321.5$ ($2.4$) ${}^{***}$ & [$305.2$ ($4.2$)] \\
        & 200 & 2048 & $327.2$ ($1.9$) & $326.5$ ($1.8$) ${}^{~~~}$ & $324.0$ ($4.5$) ${}^{*~~}$ & [$290.5$ ($1.4$)] \\
        \bottomrule
    \end{tabular}}%
    \par
    \vspace*{4ex}
    \caption{Test mean per-feature reconstruction errors by VAEs.}
    \label{tab:result3}
    \setlength{\tabcolsep}{8pt}
    {\small\begin{tabular}{ccccccc}
        \toprule
        \multicolumn{3}{c}{Setting} & \multicolumn{4}{c}{Results} \\
        \cmidrule(lr){1-3}\cmidrule(lr){4-7}
        Dataset & $\dim(\bm{z})$ & $\dim(\text{MLP})$ & L2 only & L2 + out-layer & \textbf{L2 + proposed} & dedicated decoder \\
        \midrule\midrule
        \textsc{Toy} & 4 & 32 & $0.758$ ($0.148$) & $0.412$ ($0.076$) ${}^{***}$ & $0.407$ ($0.068$) ${}^{***}$ & $0.806$ ($0.370$) \\ 
        \midrule
        \multirow{9}*{\textsc{Plant}}
        & 4 & 32 & $8.776$ ($0.183$) & --- & $8.633$ ($0.112$) ${}^{*~~~}$ & --- \\
        & 4 & 64 & $8.728$ ($0.166$) & --- & $8.672$ ($0.101$) ${}^{~~~~}$ & --- \\
        & 4 & 128 & $8.778$ ($0.157$) & --- & $8.679$ ($0.104$) ${}^{*~~~}$ & --- \\
        & 7 & 32 & $8.536$ ($0.232$) & --- & $8.322$ ($0.165$) ${}^{***~}$ & --- \\
        & 7 & 64 & $8.213$ ($0.154$) & --- & $8.061$ ($0.163$) ${}^{**~~}$ & --- \\
        & 7 & 128 & $8.206$ ($0.073$) & --- & $8.157$ ($0.145$) ${}^{~~~~}$ & --- \\
        & 11 & 32 & $8.468$ ($0.269$) & --- & $8.374$ ($0.294$) ${}^{~~~~}$ & --- \\
        & 11 & 64 & $8.106$ ($0.233$) & --- & $7.949$ ($0.166$) ${}^{**~~}$ & --- \\
        & 11 & 128 & $7.673$ ($0.155$) & --- & $7.567$ ($0.186$) ${}^{**~~}$ & --- \\
        \midrule
        \multirow{9}*{\textsc{Solar}}
        & 4 & 128 & $4.507$ ($0.963$) & $4.315$ ($0.463$) ${}^{~~~}$ & $2.574$ ($0.330$) ${}^{***}$ & --- \\
        & 4 & 256 & $4.507$ ($0.963$) & $4.315$ ($0.463$) ${}^{~~~}$ & $2.574$ ($0.330$) ${}^{***}$ & --- \\
        & 4 & 512 & $3.739$ ($0.337$) & $3.781$ ($0.462$) ${}^{~~~}$ & $2.757$ ($0.334$) ${}^{***}$ & --- \\
        & 13 & 128 & $3.195$ ($1.413$) & $2.953$ ($0.501$) ${}^{~~~}$ & $2.131$ ($0.156$) ${}^{*~~~}$ & --- \\
        & 13 & 256 & $3.215$ ($0.539$) & $3.145$ ($0.470$) ${}^{~~~}$ & $1.972$ ($0.164$) ${}^{***}$ & --- \\
        & 13 & 512 & $3.388$ ($0.319$) & $3.419$ ($0.218$) ${}^{~~~}$ & $2.228$ ($0.250$) ${}^{***}$ & --- \\
        & 54 & 128 & $2.367$ ($1.545$) & $1.873$ ($0.106$) ${}^{~~~}$ & $1.931$ ($0.075$) ${}^{~~~~}$ & --- \\
        & 54 & 256 & $2.732$ ($0.425$) & $2.710$ ($0.487$) ${}^{~~~}$ & $1.904$ ($0.157$) ${}^{***}$ & --- \\
        & 54 & 512 & $4.462$ ($0.216$) & $4.405$ ($0.332$) ${}^{~~~}$ & $3.033$ ($0.310$) ${}^{***}$ & --- \\
        \bottomrule
        \multicolumn{7}{r}{{\scriptsize Significant difference from the case of ``L2 only'' with \ \ ${}^{***}$: $p<0.001$, ~ ${}^{**}$: $p<0.01$, ~ or ~ ${}^*$: $p<0.05$.}}
    \end{tabular}}%
\end{table*}


\end{document}